\def\ispres{1}
\def\balign#1\ealign{\begin{align}#1\end{align}}
\def\baligns#1\ealigns{\begin{align*}#1\end{align*}}
\def\bitemize#1\eitemize{\begin{itemize}#1\end{itemize}}
\def\benumerate#1\eenumerate{\begin{enumerate}#1\end{enumerate}}
\newcommand{\qtext}[1]{\quad\text{#1}\quad} 
\let\originalleft\left
\let\originalright\right
\renewcommand{\left}{\mathopen{}\mathclose\bgroup\originalleft}
\renewcommand{\right}{\aftergroup\egroup\originalright}
\def\tinycitep*#1{{\tiny\citep*{#1}}}
\def\tinycitealt*#1{{\tiny\citealt*{#1}}}
\def\tinycite*#1{{\tiny\cite*{#1}}}
\def\smallcitep*#1{{\scriptsize\citep*{#1}}}
\def\smallcitealt*#1{{\scriptsize\citealt*{#1}}}
\def\smallcite*#1{{\scriptsize\cite*{#1}}}
\def\reals{\mathbb{R}} 
\def\<{\left\langle} 
\def\>{\right\rangle}
\def\defeq{\triangleq} 
\def\indic#1{\mbf{1}\left\{{#1}\right\}} 
\providecommand{\argmin}{\mathop\mathrm{arg min}}
\providecommand{\dom}{\mathop\mathrm{dom}}
\newcommand{\algref}[1]{Algorithm~{\ref{alg:#1}}}
\newcommand{\figref}[1]{Figure~{\ref{fig:#1}}}
\newcommand{\lemref}[1]{Lemma~{\ref{lem:#1}}}
\newcommand{\secref}[1]{Section~{\ref{sec:#1}}}
\newcommand{\propref}[1]{Proposition~{\ref{prop:#1}}}
\newtheorem{theorem}{Theorem}
\newtheorem{lemma}[theorem]{Lemma}
\renewenvironment{proof}{\noindent\textbf{Proof}\hspace*{1em}}{\qed\\}
\newenvironment{proof-sketch}{\noindent\textbf{Proof Sketch}
  \hspace*{1em}}{\qed\bigskip\\}
\newenvironment{proof-idea}{\noindent\textbf{Proof Idea}
  \hspace*{1em}}{\qed\bigskip\\}
\newenvironment{proof-of-lemma}[1][{}]{\noindent\textbf{Proof of Lemma {#1}}
  \hspace*{1em}}{\qed\\}
\newenvironment{proof-of-theorem}[1][{}]{\noindent\textbf{Proof of Theorem {#1}}
  \hspace*{1em}}{\qed\\}
\newenvironment{proof-attempt}{\noindent\textbf{Proof Attempt}
  \hspace*{1em}}{\qed\bigskip\\}
\newtheorem{proposition}[theorem]{Proposition}
\newcommand{\breg}{b_{\textrm{reg}}}
\newcommand{\dataset}{\mathcal{D}}
\def\indic#1{\mathbb{I}\left[{#1}\right]}
\newcommand{\AMS}{\mathrm{AMS}}
\title{Weighted Classification Cascades for Optimizing Discovery Significance in the HiggsML Challenge}
\author{Lester Mackey \and Jordan Bryan \and Man Yue Mo}
\date{\today}
\begin{document}

\ifdefined\arxivmode
\else
\jmlrvolume{42}
\jmlryear{2015}
\jmlrworkshop{HEPML 2014}
\jmlrpages{129-134}
\title[Weighted Classification Cascades]{Weighted Classification Cascades for Optimizing Discovery Significance in the HiggsML Challenge}
\author{\Name{Lester Mackey}\Email{lmackey@stanford.edu}\\
\addr{Stanford University}
\AND
\Name{Jordan Bryan}\Email{jgbryan@stanford.edu} \\
\addr{Stanford University}
\AND
\Name{Man Yue Mo}\Email{manyue.mo@gmail.com}
}

\editor{Glen Cowan, C\'{e}cile Germain, Isabelle Guyon, Bal\`{a}zs K\'egl, David Rousseau}\fi

\maketitle

\begin{abstract}
	We introduce a minorization-maximization approach to optimizing common measures of discovery significance in 
	high energy physics.
	The approach alternates between solving a weighted binary classification problem and 
	updating class weights in a simple, closed-form manner.
	Moreover, an argument based on convex duality shows that an improvement in weighted classification error on any round yields a commensurate improvement in discovery significance.
	We complement our derivation with experimental results from the 2014 Higgs boson machine learning challenge.
\end{abstract}

\ifdefined\arxivmode
\else
\begin{keywords}
  Minorization-maximization, discovery significance, approximate median significance, weighted classification cascades, Higgs boson, Kaggle, $f$-divergence
\end{keywords}
\fi
\section{Weighted Classification Cascades for Optimizing AMS}
\label{sec:WCC}
This paper derives a minorization-maximization approach~\citep{Lange00} to optimizing common measures of discovery significance in high energy physics.
We begin by introducing notation adapted from the 2014 Higgs boson machine learning (HiggsML) challenge\footnote{Readers unfamiliar with the setting and motivation of the HiggsML challenge may wish to review the challenge documentation~{\citep{Adam-BourdariosCoGeGuKeRo14}} before proceeding.} \citep{Adam-BourdariosCoGeGuKeRo14}.
Let $\dataset = \{(x_1, y_1, w_1),\dots, (x_n, y_n, w_n)\}$ represent a weighted dataset with feature vectors $x_i \in \mathcal{X}$, labels $y_i \in \{-1,1 \}$, and weights $w_i > 0$, and let
	$g : \mathcal{X} \to \{-1,1 \}$ represent a classifier which assigns labels to each datapoint $x \in \mathcal{X}$.  Then we may define the weighted number of
\begin{itemize}[nosep]
	\item true positives produced by $g$ on $\dataset$, $\displaystyle s_{\dataset}(g) = \sum_{i=1}^n w_i \indic{g(x_i) = 1, y_i = 1}$;
	\item false positives\footnote{The quantity $b_{\dataset}(g)$ may also include a constant additive regularization term, such as the quantity $\breg$ described in the HiggsML challenge documentation~{\citep{Adam-BourdariosCoGeGuKeRo14}}.} produced by $g$ on $\dataset$, $\displaystyle b_{\dataset}(g) = \sum_{i=1}^n w_i \indic{g(x_i) = 1, y_i = -1}$;
	\item positives produced by $g$ on $\dataset$, $\displaystyle n_{\dataset}(g) = s_{\dataset}(g) + b_{\dataset}(g)$;
	\item positives in $\dataset$, $\displaystyle p_{\dataset} = \sum_{i=1}^n w_i \indic{y_i = 1}$;
	\item and false negatives produced by $g$ on $\dataset$, $\displaystyle \tilde{s}_{\dataset}(g) = p_{\dataset} - s_{\dataset}(g)$.
\end{itemize}

Our aim is to maximize the measures of \emph{approximate median significance} (AMS)~\citep{CowanCrGrVi11},
\begin{align*}
	\AMS_2(g,\dataset) &= \sqrt{2\,b_{\dataset}(g)f_2\left(\frac{s_{\dataset}(g)}{b_{\dataset}(g)}\right)}
	\qtext{for} f_2(t) = (1+t)\log(1+t) - t \qtext{and} \\
	\AMS_3(g,\dataset) &= \sqrt{2\,b_{\dataset}(g)f_3\left(\frac{s_{\dataset}(g)}{b_{\dataset}(g)}\right)}
	\qtext{for} f_3(t) = (1/2)t^2,
\end{align*}
which were employed as utility measures for the HiggsML challenge~\citep{Adam-BourdariosCoGeGuKeRo14}.
However, the approach we pursue applies equally to any utility measure of the form
\begin{align} \label{eqn:general-form}
	h\left(b_{\dataset}(g)f\left(\frac{s_{\dataset}(g)}{b_{\dataset}(g)}\right)\right)
\end{align}
where $h$ is increasing and $f$ is closed proper convex and differentiable.

We first observe that $f_2$ and $f_3$ are closed proper convex functions and hence may be rewritten in terms of their convex conjugates~\citep{BorweinLe10}.
The following \emph{linearization lemma} makes this more precise.
\begin{lemma}[Linearization Lemma]\label{lem:linearize}
Consider a differentiable, closed proper convex function $f:\reals\to\reals$ and real numbers $a >0$ and $c$ with $c/a$ in the effective domain of $f$.
If $f^*(u) \defeq \sup_{t \in \dom(f)} tu - f(t)$ is the convex conjugate of $f$, then
\begin{align} \label{eqn:fenchel-young}
{\textstyle a\,f\left(\frac{c}{a}\right)}
	= -\inf_{u\in \dom(f^*)} - c u + a f^*(u)
\end{align}
where the minimum on the right-hand side is achieved by $u^* = f'(c/a)$.
\end{lemma}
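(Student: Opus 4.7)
The plan is to view the identity \eqnref{fenchel-young} as a rescaled instance of the Fenchel--Moreau biconjugate theorem together with the equality case of the Fenchel--Young inequality. Concretely, since $f$ is closed proper convex, the biconjugate theorem gives $f = f^{**}$, i.e.\
\[
f(t) = \sup_{u \in \dom(f^*)} \left( tu - f^*(u) \right) \qquad \text{for every } t \in \dom(f).
\]
I would specialize this identity to $t = c/a$, which is allowed by the hypothesis that $c/a \in \dom(f)$, and then multiply both sides by $a > 0$. Because the supremum is unchanged under positive scaling of its argument and $a\,[(c/a)u - f^*(u)] = cu - af^*(u)$, one obtains
\[
a\,f(c/a) = \sup_{u \in \dom(f^*)} \left( cu - a f^*(u) \right) = -\inf_{u \in \dom(f^*)} \left( -cu + a f^*(u) \right),
\]
which is \eqnref{fenchel-young}.

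For the second claim, that the minimum on the right is attained at $u^\star = f'(c/a)$, I would appeal to the equality conditions for the Fenchel--Young inequality: $tu = f(t) + f^*(u)$ if and only if $u \in \partial f(t)$. Since $f$ is differentiable at $c/a \in \dom(f)$, the subdifferential there is the singleton $\{f'(c/a)\}$, so taking $u^\star = f'(c/a)$ yields $(c/a)u^\star - f^*(u^\star) = f(c/a)$, and multiplying by $a$ gives $cu^\star - a f^*(u^\star) = a f(c/a)$, matching the supremum computed above.

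The only technicality to double-check is that $u^\star = f'(c/a)$ actually lies in $\dom(f^*)$, so that the infimum over $\dom(f^*)$ is truly achieved rather than merely approached. This is standard: whenever $\partial f(t)$ is nonempty, every element of it belongs to $\dom(f^*)$, because the Fenchel--Young equality $f^*(u) = tu - f(t) < \infty$ holds at any such $u$. I expect this bookkeeping step — verifying that differentiability of $f$ at $c/a$ justifies both the singleton subdifferential and the membership of $f'(c/a)$ in $\dom(f^*)$ — to be the only subtle part of the argument; everything else is a rescaling of textbook convex-conjugate identities.
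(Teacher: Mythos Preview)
Your proposal is correct and follows essentially the same route as the paper: both obtain \eqnref{fenchel-young} from the Fenchel--Young/biconjugate relationship and identify the minimizer via the subgradient equality condition. The only cosmetic difference is that the paper re-derives the attainment at $u^\star=f'(c/a)$ by hand---combining the Fenchel--Young lower bound with the first-order convexity inequality $f(c/a)\le f(v)+(c/a-v)f'(c/a)$ and taking an infimum over $v$---whereas you invoke the standard equality case $u\in\partial f(t)$ directly.
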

\begin{proof}
The representation \eqref{eqn:fenchel-young} is a direct application of the 
Fenchel-Young inequality~\citep{BorweinLe10}, which further implies that
${\textstyle a\,f\left(\frac{c}{a}\right)} \geq c f'(c/a) - a f^*(f'(c/a))$.
The convexity and differentiability of $f$ and the positivity of $a$ further imply that
${\textstyle a\,f\left(\frac{c}{a}\right)} \leq af(v) + a(c/a - v)f'(c/a)$ for all $v\in\dom(f)$.
Taking an infimum over $v\in\dom(f)$ on the righthand side yields 
${\textstyle a\,f\left(\frac{c}{a}\right)} = c f'(c/a) - a f^*(f'(c/a))$ as advertised.
\end{proof}

By applying this lemma to our expressions for $\AMS_2$ and $\AMS_3$, we obtain fruitful 
variational representations for our significance measures.
\begin{proposition}[Variational Representations for Approximate Median Significance]
\label{prop:AMS}
\begin{align*}
-\frac{1}{2}\AMS_2(g,\dataset)^2 
	&= \inf_{u} R_2(g,u,\dataset) 
	\qtext{for} R_2(g,u,\dataset) \defeq b_{\dataset}(g)\,(e^u - u - 1) + \tilde{s}_{\dataset}(g)\,u - p_{\dataset}\,u, \\ 
u_2^* 
	&\defeq \argmin_u R_2(g,u,\dataset) = \log(s_{\dataset}(g)/b_{\dataset}(g) + 1), \\
-\frac{1}{2}\AMS_3(g,\dataset)^2
	&= \inf_u R_3(g,u,\dataset)
	\qtext{for} R_3(g,u,\dataset) \defeq b_{\dataset}(g)\,u^2/2 + \tilde{s}_{\dataset}(g)\,u  - p_{\dataset}\,u, \qtext{and} \\
u_3^* 
	&\defeq \argmin_u R_3(g,u,\dataset) = s_{\dataset}(g)/b_{\dataset}(g).
\end{align*}
\end{proposition}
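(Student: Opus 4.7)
The plan is to apply the Linearization Lemma directly to each $\AMS_i^2$ with the identification $a = b_{\dataset}(g)$ and $c = s_{\dataset}(g)$. Since $\AMS_i(g,\dataset)^2 = 2\,b_{\dataset}(g)\,f_i(s_{\dataset}(g)/b_{\dataset}(g))$, the lemma immediately gives
\[
-\tfrac{1}{2}\AMS_i(g,\dataset)^2 \;=\; \inf_{u\in\dom(f_i^*)}\bigl[\,b_{\dataset}(g) f_i^*(u) - s_{\dataset}(g)\,u\bigr],
\]
with the minimizer $u_i^* = f_i'(s_{\dataset}(g)/b_{\dataset}(g))$. So I need to do two things: (i) verify that $f_2, f_3$ are closed proper convex and differentiable with $s_{\dataset}(g)/b_{\dataset}(g)$ in their effective domains, and (ii) compute $f_2^*$, $f_3^*$, $f_2'$, and $f_3'$ in closed form. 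Then I match the resulting expression to the stated $R_i$ using the algebraic identity $\tilde{s}_{\dataset}(g) - p_{\dataset} = -s_{\dataset}(g)$, which follows directly from the definition $\tilde{s}_{\dataset}(g) = p_{\dataset} - s_{\dataset}(g)$.

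For $f_3(t) = t^2/2$, the conjugate is immediate: $f_3^*(u) = u^2/2$ and $f_3'(t) = t$, giving $u_3^* = s_{\dataset}(g)/b_{\dataset}(g)$ and matching the stated $R_3$. For $f_2(t) = (1+t)\log(1+t) - t$ on $\dom(f_2) = (-1,\infty)$, I set $\phi(t) = tu - f_2(t)$ and solve $\phi'(t) = u - \log(1+t) = 0$, which yields the unique stationary point $t = e^u - 1$; substituting back gives $f_2^*(u) = e^u - u - 1$, and $f_2'(t) = \log(1+t)$ gives $u_2^* = \log(s_{\dataset}(g)/b_{\dataset}(g) + 1)$. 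Since $s_{\dataset}(g)/b_{\dataset}(g) \geq 0 > -1$ whenever $b_{\dataset}(g) > 0$, the domain condition in the Linearization Lemma is satisfied.

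Plugging $f_i^*$ back into the generic variational identity and applying $\tilde{s}_{\dataset}(g) - p_{\dataset} = -s_{\dataset}(g)$, the bracketed objectives become exactly $R_2$ and $R_3$ as stated, and the minimizers coincide with $u_2^*$ and $u_3^*$. The only mild subtlety is ensuring the expressions are well-posed when $b_{\dataset}(g) = 0$; in that regime the AMS values vanish (or one works with the additive regularization $\breg$ noted in the footnote), so I would briefly handle the degenerate case separately or simply assume $b_{\dataset}(g) > 0$. The rest is routine verification, so no step presents a serious obstacle beyond correctly computing the Legendre transform of $f_2$.
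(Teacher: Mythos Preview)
Your proposal is correct and follows essentially the same approach as the paper: apply the Linearization Lemma with $a = b_{\dataset}(g)$, $c = s_{\dataset}(g) = p_{\dataset} - \tilde{s}_{\dataset}(g)$, and $f = f_m$, then invoke the closed-form conjugates $f_2^*(u) = e^u - u - 1$, $f_3^*(u) = u^2/2$ and derivatives $f_2'(t) = \log(1+t)$, $f_3'(t) = t$. If anything, your write-up is more detailed than the paper's, which simply states the conjugates and derivatives without derivation and does not discuss the degenerate case $b_{\dataset}(g) = 0$.
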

\begin{proof}
To obtain the result for $-\frac{1}{2}\AMS_m(g,\dataset)^2$ for $m\in\{2,3\}$ we apply \lemref{linearize} 
with $a = b_{\dataset}(g),$ $c = s_{\dataset}(g) = p_{\dataset}-\tilde{s}_{\dataset}(g),$ and $f = f_m$ noting that
$
	f_2^*(u) = e^u - u - 1,\ f_2'(t) = \log(t+1),\ 
	f_3^*(u) = u^2/2,\ \text{and } f_3'(t) = t. 
$
\end{proof}

\propref{AMS} shows that, for $m\in\{2,3\}$, maximizing $\AMS_m(g,\dataset)$ over $g$ is equivalent to minimizing $R_m(g,u,\dataset)$ jointly over $f$ and $u$.  
To minimize $R_m(g,u,\dataset)$, we adopt a coordinate descent strategy which alternates between optimizing $f$ with $u$ held fixed and updating $u$ with $f$ held fixed.
Optimizing $f$ for fixed $u$ is equivalent to solving a weighted binary classification problem with class weights determined by $u$.
Consequently, this step can be carried out using any classification procedure that supports observation weights.
Furthermore, we have seen that the optimal value $u^*$ for a given $f$ can be computed in closed form.
Thus, our proposed optimization scheme consists of solving a series of weighted binary classification problems, a \emph{weighted classification cascade}.
The cascade steps for optimizing $\AMS_2$ and $\AMS_3$ are presented in \algref{ams2} and \algref{ams3} respectively; 
an illustration of weighted classification cascade progress is provided in \figref{mm-progress}.

\begin{algorithm}[ht]
   \caption{Weighted Classification Cascade for $\AMS_2$}
   \label{alg:ams2}
\begin{algorithmic}
	 \STATE {\bfseries input:} $u_0 > 0$
	 \FOR{t = 1 \TO T }
	 \STATE $g_{t}\gets$ approximate minimizer of weighted classification error $b_{\dataset}(g)\,(e^{u_{t-1}} - u_{t-1} - 1) + \tilde{s}_{\dataset}(g)\,u_{t-1}$, obtained from any weighted classification procedure
	 \STATE $u_{t}\gets \log(s_{\dataset}(g_t)/b_{\dataset}(g_t) + 1)$
	 \ENDFOR
	 \RETURN $g_T$
\end{algorithmic}
\end{algorithm}
\begin{algorithm}[ht]
   \caption{Weighted Classification Cascade for $\AMS_3$}
   \label{alg:ams3}
\begin{algorithmic}
	 \STATE {\bfseries input:} $u_0 > 0$
	 \FOR{t = 1 \TO T }
	 \STATE $g_{t}\gets$ approximate minimizer of weighted classification error $b_{\dataset}(g)\,u_{t-1}^2/2 + \tilde{s}_{\dataset}(g)\,u_{t-1}$, obtained from any weighted classification procedure
	 \STATE $u_{t}\gets s_{\dataset}(g_t)/b_{\dataset}(g_t)$
	 \ENDFOR
	 \RETURN $g_T$
\end{algorithmic}
\end{algorithm}

Finally, we note that $\AMS_m$ is guaranteed to increase whenever a newly selected scoring function $g_{t+1}$ achieves smaller weighted classification error with respect to $u_t$ than its predecessor $g_t$, since in this case $R_m(g_{t+1},u_{t},\dataset)
	< R_m(g_{t},u_{t},\dataset)$, and hence
\[
-\frac{1}{2}\AMS_m(g_{t+1},\dataset)^2
	\leq R_m(g_{t+1},u_{t},\dataset)
	< R_m(g_{t},u_{t},\dataset)
	= -\frac{1}{2}\AMS_m(g_{t},\dataset)^2.
\]
Such a monotonicity property is characteristic of majorization-minimization and minorization-maximization algorithms~\citep{Lange00}.
\begin{figure}[h] \label{fig:mm-progress}
\centering
  \includegraphics[width=1\linewidth]{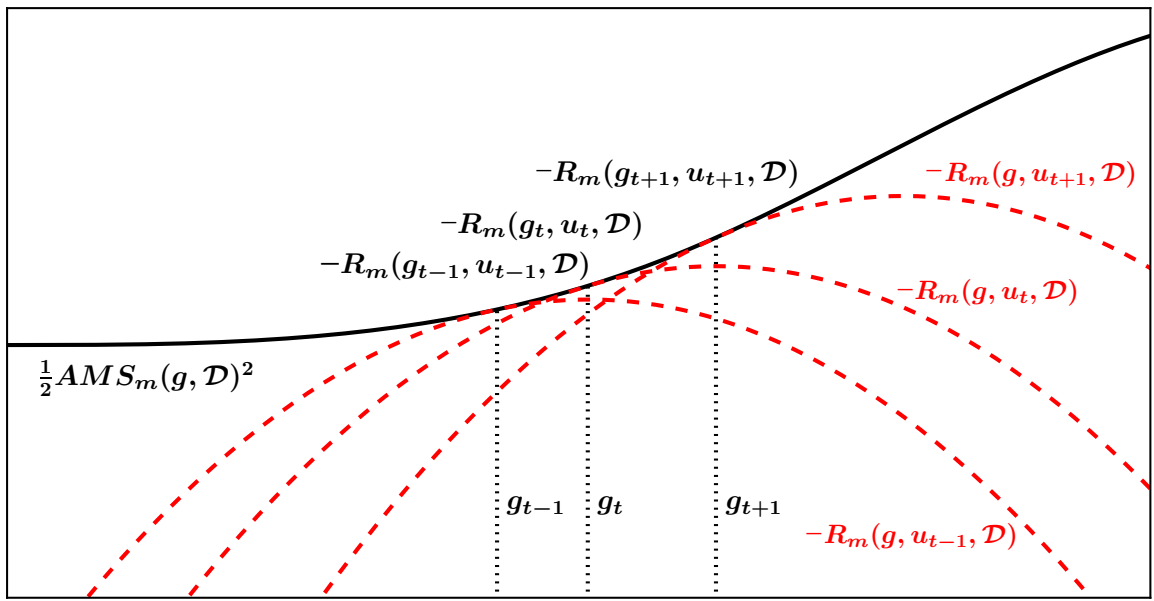}
  \caption{An illustration of the progress of a weighted classification cascade.}
\end{figure}

\subsection{Related work}
The functional form $b_{\dataset}(g)f\left(\frac{s_{\dataset}(g)}{b_{\dataset}(g)}\right)$ for convex $f$ is evocative of the class of discrepancy measures known as $f$-divergences~\citep{LieseVa06}.  Indeed, $b_{\dataset}(g)f\left(\frac{s_{\dataset}(g)}{b_{\dataset}(g)}\right)$ can be viewed as a generalized $f$-divergence between two unnormalized measures.
\citet{NguyenWaJo10} and \citet{Lexa12} have derived algorithms analogous to those derived here for optimizing $f$-divergences.

\section{HiggsML Challenge Case Study}
While the algorithms of \secref{WCC} provide simple recipes for turning any classifier that supports class weights into a training set AMS maximizer, the procedures should be coupled with effective regularization strategies to ensure adequate generalization from training error to held-out test error. 
In this section, we will describe the practical strategies employed by the HiggsML challenge team \texttt{mymo}, which incorporated two variants of weighted classification cascades into its final contest solution.

The first cascade variant used the XGBoost implementation of gradient tree boosting\footnote{\url{https://github.com/tqchen/xgboost}} to learn the base classifier $g_t$ on each round of \algref{ams2}.
To curb overfitting to the training set, on each cascade round, the team computed weighted true and false positive counts on a held-out validation dataset $\dataset_{\text{val}}$ and updated the class weight parameter $u_t$ using 
$s_{\dataset_{\text{val}}}(g_t)$ and $b_{\dataset_{\text{val}}}(g_t)$ in lieu of
$s_{\dataset}(g_t)$ and $b_{\dataset}(g_t)$.
The cascading procedure was run until the validation set AMS failed to increase (this often occurred on the third iteration) and was then run for a small number of additional rounds (typically ten).
Since XGBoost is a randomized learning algorithm, this entire cascade was rerun multiple times, and the classifiers from those cascade iterations yielding the highest validation set AMS scores were incorporated into the final solution ensemble.

The second cascade variant maintained a single persistent classifier, the complexity of which grew on each cascade round.  More precisely, team \texttt{mymo} developed a customized XGBoost classifier that, on cascade round $t$, introduced a single new decision tree based on the gradient of the round $t$ weighted classification error in \algref{ams2}.  In effect, each classifier $g_t$ was warm-started from the prior round classifier $g_{t-1}$.
For this variant, the number of cascade iterations $T$ was typically set to $500$.

The final contest solution was an ensemble of cascade procedures of each variant and several non-cascaded XGBoost, random forest, and neural network models.
The non-cascade models together yielded a private leaderboard score of 3.67 (198th place on the private leaderboard).
Incorporating the cascade models boosted that score to 3.72594, leaving team \texttt{mymo} in 31st place out of the 1785 teams in the competition. 
A separate post-challenge assessment by team \texttt{mymo} revealed that averaging the predictions of ten models, five standard XGBoost models trained without cascade weighting for $T=500$ iterations and five XGBoost models trained with the second variant
of cascade weighting for $T=500$ iterations led to a private leaderboard score of 3.72.
These results are evidence for the utility of cascading, and we hypothesize that additional benefits will be revealed by a more comprehensive empirical evaluation of cascade regularization strategies.

\bibliographystyle{abbrvnat}
\bibliography{refs}
\end{document}